\newtheorem{theorem}{Theorem}[section]
\newtheorem{lemma}[theorem]{\TE{Lemma}}
\algnewcommand{\LineComment}[1]{\State \(\triangleright\) #1}
\newcommand*{\colorboxed}{}
\def\colorboxed#1#{%
  \colorboxedAux{#1}%
}
\newcommand*{\colorboxedAux}[3]{%
  \begingroup
    \colorlet{cb@saved}{.}%
    \color#1{#2}%
    \boxed{%
      \color{cb@saved}%
      #3%
    }%
  \endgroup
}
\def\Eqref Eq:#1:{\eqref{eq:#1}}
\newcommand{\E}[1]{\mathbf{#1}}
\newcommand{\TE}[1]{\textbf{#1}}
\newcommand{\TWO}[2]{\left(\setlength{\arraycolsep}{1pt}\begin{array}{cc}{#1}, & {#2}\end{array}\right)}
\newcommand{\fmin}[1]{\underset{#1}{\E{min}}\;}
\newcommand{\fmax}[1]{\underset{#1}{\E{max}}\;}
\newcommand{\argmin}[1]{\underset{#1}{\E{argmin}}\;}
\newcommand{\argminP}[1]{\E{argmin}\;}
\newcommand{\argmaxP}[1]{\E{argmax}\;}
\newcommand{\ST}{\E{s.t.}\;}
\definecolor{darkgreen}{HTML}{186a3b}
\newcommand{\IND}[1]{\mathbb{I}[#1]}
\title{\Large\bf Decision Making in Joint Push-Grasp Action Space for Large-Scale Object Sorting\vspace{-5px}}
\author{Zherong Pan and Kris Hauser$^\dagger$\vspace{-20px}  \\
\thanks{$^\dagger$ Zherong Pan and Kris Hauser are with the Department of Computer Science, University of Illinois at Urbana-Champaign. {\tt\small \{zherong,kkhauser\}@illinois.edu}}}
\newif\ifarxiv
\begin{document}
\maketitle
\thispagestyle{empty}
\pagestyle{empty}

\begin{abstract}
We present a planner for large-scale (un)labeled object sorting tasks, which uses two types of manipulation actions: overhead grasping and planar pushing. The grasping action offers completeness guarantee under mild assumptions, and planar pushing is an acceleration strategy that moves multiple objects at once. Our main contribution is twofold: (1) We propose a bilevel planning algorithm. Our high-level planner makes efficient, near-optimal choices between pushing and grasping actions based on a cost model. Our low-level planner computes one-step greedy pushing or grasping actions. (2) We propose a novel low-level push planner that can find one-step greedy pushing actions in a semi-discrete search space. The structure of the search space allows us to efficient We show that, for sorting up to $200$ objects, our planner can find near-optimal actions with $10$ seconds of computation on a desktop machine.
\end{abstract}

\section{\label{sec:intro}Introduction}
Countless object sorting machines have been designed over the past century. The robustness of these machines are high enough to be used as a part of manufacturing process. Early systems use pure mechanical gadgets to force objects into separate buckets according to their shapes~\cite{leopoldo1961fruit,odquist1943sorting}. In addition to robustness, the efficacy of these mechanical systems are rather high, allowing multiple objects to be sorted in parallel.  But warehouse automation and service robotics require sorting objects according to visual features, such as the printed address on a package or object color. The vast majority of sorting robots \cite{wurman2016amazon,yu2016summary} solely rely on grasping actions and treat multiple objects in a serial manner. This design choice is largely due to the robustness of grasping to uncertainties in perception and execution. However, serial object grasping does not even reach a fraction of the throughput of purely mechanical gadgets.

Planar pushing is a promising direction to achieve more efficient sorting, because many objects can be moved at once~\cite{8793946,Song2019}. However, planning for pushing is far more complex that that for grasping for three reasons. First, the action space of planar pushing is continuous, involving the pusher's initial orientation and pushing direction, while the action space of overhead grasping is (relatively) discrete. Second, the dynamics of pushing are complex and uncertain, even in the single-object case, since the continuous pressure force distribution between the object and the ground is unknown~\cite{7487155}. Third, although Akella and Mason~\cite{219923} showed that a single object can be pushed to an arbitrary pose, this has not yet been proven for multi-object pushing. Recent learning-based methods \cite{8793946,Song2019,Zeng2018} reformulate an object sorting problem as an optimization problem by defining a cost model and use stochastic search to reduce the cost. But these methods are incomplete, and it is difficult to either analyze or anticipate their failure cases.

\begin{figure}[ht]
\centering
\includegraphics[width=0.95\linewidth]{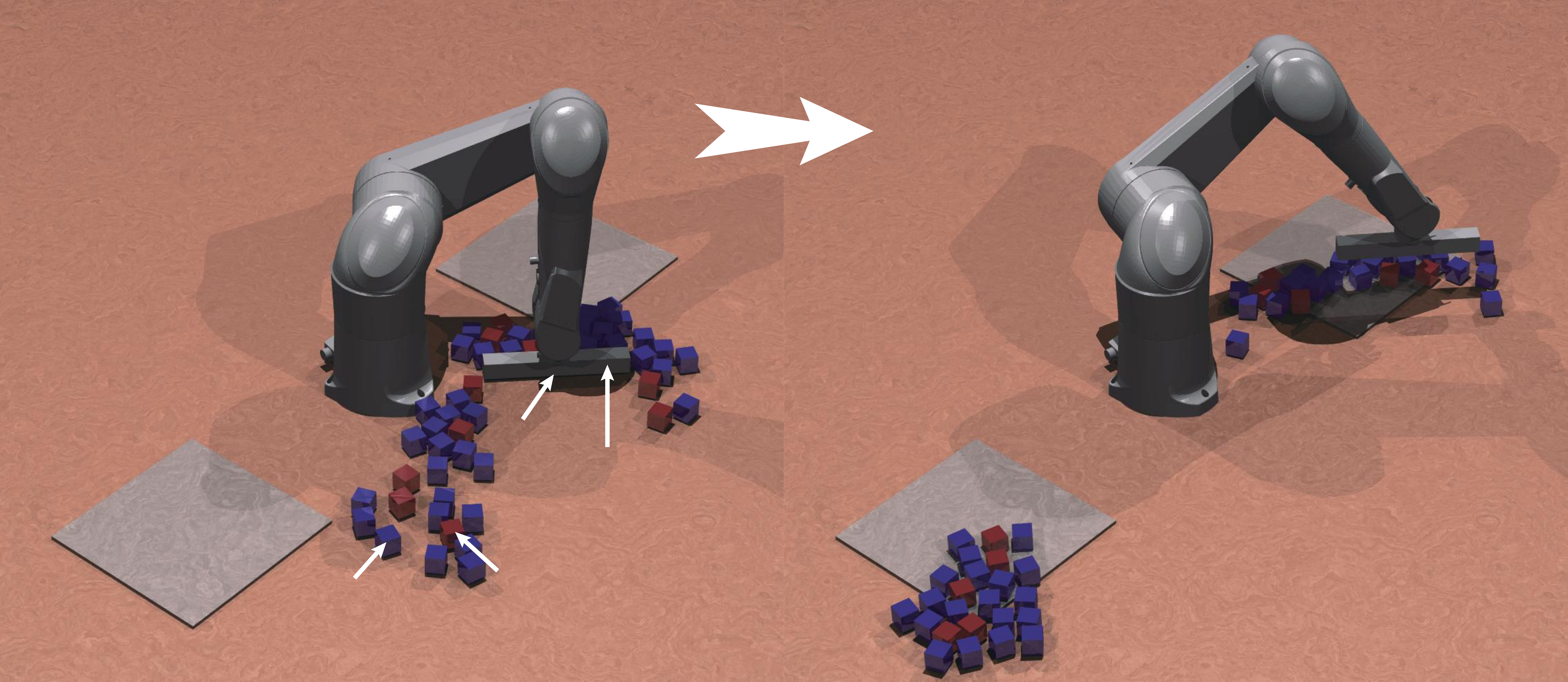}
\put(-138,81){\textcolor{white}{\tiny{(a)}}}
\put(-100,81){\textcolor{white}{\tiny{(b)}}}
\put(-225,30){\textcolor{white}{\tiny{Target Region 1}}}
\put(-140,65){\textcolor{white}{\tiny{Target Region 2}}}
\put(-170,33){\textcolor{white}{\tiny{Sucker}}}
\put(-150,28){\textcolor{white}{\tiny{Pusher}}}
\put(-215,10){\textcolor{white}{\tiny{Object Category 1}}}
\put(-160,10){\textcolor{white}{\tiny{Object Category 2}}}
\caption{\label{fig:teaser} (a): An illustration of our problem setting. The goal is for the same number of red and blue objects to fall into the 1st and 2nd target region. (b): After 3 pushing actions and 1 grasping action, the objects have been approximately sorted.}
\vspace{-5px}
\end{figure}

\TE{Main Results:} Our main contribution is a bilevel motion planning algorithm that can efficiently make decisions in joint push-grasp action space. The efficacy of our method is due to two novel techniques. (1) We decompose the responsibility between the high- and low-level planner, such that low-level planner can efficiently determine one-step greedy grasping or pushing actions and high-level planners make binary choices between grasping and pushing actions over multiple steps. Since the high-level planner only considers greedy actions, the branching factor is significantly reduced and search over multiple steps become practical. (2) We take mild assumptions in the low-level push planner, so that finding the optimal pushing action becomes a numerical optimization with piecewise quadratic objective functions. As a result, the optimal pushing action can be found via quadratic-piece enumeration, and costly global optimization is avoided.

Compared with learning-based methods \cite{8793946,Song2019,Zeng2018}, we can provide completeness guarantee with the help of the grasp action under mild assumptions that feasibility is not violated by non-prehensile manipulations. (As indicated in \cite{TanYu19ISRR}, non-prehensile manipulations can move objects into unreachable regions of the robot arm, making problem infeasible.) Our method is solely analytical and does not require hyperparameter tuning. We evaluated our synergetic planner on both labeled and unlabeled tasks of sorting 50-200 objects. The results show that our method can benefit from pushing actions to achieve up to $10\times$ speedup in terms of execution time, as compared with our method using grasping actions alone. And the computational time to solve for the next action is within $10$ seconds on a desktop machine.
\section{\label{sec:related}Related Work}
We review related work in multi-object manipulation, push planning, and grasp planning.

\TE{Multi-object Manipulation} allows the robot to move multiple objects simultaneously in order to accomplish a task. Typical tasks involve object sorting \cite{8793946,Song2019}, clutter removal \cite{TanYu19ISRR,TerryRuss2020WAFR}, object placement \cite{cosgun2011push}, and object singulation \cite{Zeng2018,8793972}. We notice several common design choices in these methods. First, all these methods are restricted to 2D workspaces by assuming that the gripper always reaches objects from overhead. Our method also uses this simplification. Second, most of the proposed methods use a single action, either grasping or pushing. An exception is made in \cite{Zeng2018}, where objects are singulated by pushing actions and then grasped, which is similar to our planner. But the pushing action in \cite{Zeng2018} is used as a grasping auxiliary, and objects are always transferred using grasping actions, while we allow objects to be transferred by both pushing and grasping. Finally, our algorithm is designed to be analytic and parameter-free, whereas prior learning-based works \cite{TanYu19ISRR,TerryRuss2020WAFR,Zeng2018} are sensitive to parameters. On the other hand, we assume perfect perception of object positions, while learning-based methods can deal-with sensing certainty by training a visuomotor policy in an end-to-end manner.

\TE{Grasp Planning} is relatively simple in our problem as we assume the use of a sucker. Most prior works assume more dexterous grippers such as the parallel jaw gripper \cite{mahler2017binpicking} and multi-fingered grippers \cite{miller2004graspit}. Parallel jaw gripper is available at a low cost and thus assumed in multi-object manipulation planners, e.g. \cite{Zeng2018,8793972}, but gripper feasibility can pose a major problem when objects are densely cluttered. In their most recent work, Mahler \cite{mahler2019learning} used both parallel jaw gripper and sucker mounted on two arms. They argued that the sucker might fail on certain materials such as hairy deformable objects and objects made of porous media. In applications like warehouse automation, however, this problem can be avoided by packing objects into a boxes.

\TE{Push Planning} is a well-studied problem if there is one single object. Prior work \cite{goyal1991planar} showed that the object motion under-pushing can be approximated quasistatically by modeling the limiting surface of contact wrenches. The feasibility of posing a single object using pushing has been proved in \cite{219923}. Another proof is presented in \cite{zhou2019pushing} by showing that planar pushing is differential flat. However, the object dynamics, motion planning algorithm, and feasibility when pushing multiple objects are still open problems. We propose an aggressively simplified multi-object prediction model. We show that this model allows efficient computation of the one-step greedy pushing action, while the approximation error is acceptable for the purpose of accelerating object sorting tasks.
\section{\label{sec:prob}Object Sorting Problem}
In this section, we formulate large-scale (un)labeled object sorting tasks. We assume there are $N$ planar objects with center-of-mass at $\E{o}_{1,\cdots,N}$. The planar assumption is used by our low-level grasp planner to enable overhead grasping using suckers. It is also used by the push planner to analyze object configurations in the 2D projected workspace. These objects are divided into $C$ categories and each object is assigned a category label $l_i\in\{1,\cdots,C\}$. Our problem definition unifies unlabeled object sorting when $C=1$ and fully labeled object sorting when $C=N$.

We further assume that there are $T$, pairwise disjoint virtual target regions, where each region is also represented as a convex polygon. These regions are virtual and not marked by any physical objects, so that objects will not be blocked when pushed. The convexity of regions is also required by the push planner to predict the result of a potential pushing action. In addition, the regions must be disjoint for the completeness of one-step grasping actions. We denote the closed convex set of the $j$th target region as $\E{t}_j$. Each $\E{t}_j$ has a capacity for each object category, denoted as $c_{1,\cdots,C}(\E{t}_j)$. The goal of an object sorting task is to move all $\E{o}_i$ such that the following two conditions hold:
\begin{align}
\label{eq:cond1}
\sum_{j=1}^T&\IND{\E{o}_i\in\E{t}_j}=1
\quad\forall i=1,\cdots,N   \\
\label{eq:cond2}
\sum_{l_i=k}&\IND{\E{o}_i\in\E{t}_j}\leq c_k(\E{t}_j)
\quad\forall j=1,\cdots,T\quad k=1,\cdots,C,
\end{align}
where $\IND{\bullet}$ is the indicator function. The first equation implies that each object must fall inside one of the target regions. The second equation implies that, in a certain region, the number of objects of a certain category does not exceed the capacity of that region. 

\subsection{3D Workspace}
We conduct experiments in the 3D workspace as shown in \prettyref{fig:teaser}, but our planner performs all the computations in the projected 2D workspace. The 2D-to-3D gap is closed by using a reachability analysis of the gripper. We precompute the inverse kinematics for each uniformly sampled planar position, and approximate the inverse kinematics in between samples using bilinear interpolation. The neighboring samples are connected such that a planar trajectory can be globally resolved using the algorithm proposed in \cite{hauser2018global}, which is important for realizing a pushing action. Our planner will use this reachability map in three ways:
\begin{itemize}
\item Function $\E{reach}(\E{x})$ checks whether $\E{x}$ can be reached.
\item Function $\E{traj}(\E{x},\E{y})$ finds a trajectory from $\E{x}$ to $\E{y}$.
\item Function $\E{range}(\E{x},\E{d})$ returns a pair of distances $(a,b)$ that defines the maximal resolvable push from $\E{x}$ along $\E{d}$, i.e. a push from $\E{x}+a\E{d}$ to $\E{x}+b\E{d}$ is the longest, globally resolvable push ($a,b$ can be negative).
\end{itemize}

\subsection{Overview}
Finding reasonable pushing or grasping actions is challenging due to a continuous decision space and a long planning horizon. Although the decision space for grasping actions is discrete, push planner must search over a continuous space of the pusher's position, orientation, and moving distance. If the continuous space is exhaustively discretized, then the branching factor can be prohibitively high. On the other hand, our experiments show that solving a sorting task can require up to 50 grasping or pushing actions. If a motion plan is only accepted when it successfully accomplish a sorting task, then full-horizon planning is required, which is impractical considering the high branching factor.

We combine two ideas to design a practical bilevel planning algorithm. First, we introduce a cost function that measures the closedness between an arbitrary configuration and a final, sorted configuration. As a result, our high-level planner can work in a receding-horizon mode guided by the cost function. Second, we significantly reduce the branching factor by only considering one-step greedy actions. In other words, our high-level planner only chooses the type of actions (grasping or pushing), while we use two low-level planners to ensure that the chosen action leads to the highest reduction in the cost function among all the actions of the same type.

Intuitively, our cost function $J$ sums over the distances between objects and target regions. This cost is zero if and only if sorting is successful. Among all possible object-to-target-region pairings, we choose the one with lowest cost value, which amounts to the following optimization:
\begin{equation}
\begin{aligned}
\label{eq:cost}
J(\E{o}_i)=\fmin{b_{ij}\in\{0,1\}}&
\sum_{i=1}^N\sum_{j=1}^T b_{ij}\E{dist}(\E{o}_i,\E{t}_j)    \\
\ST&\sum_{j=1}^Tb_{ij}=1\quad\sum_{l_i=k}b_{ij}=c_k(\E{t}_j),
\end{aligned}
\end{equation}
where $\E{dist}$ is the Euclidean distance between a point and a convex polygon. Computing $J$ amounts to solving an optimal assignment problem, for which the Hungarian algorithm can be used at a computational cost of $\mathcal{O}(N^3)$, by introducing dummy variables to absorb the capacity constraints. Obviously, $J(\E{o}_i)=0$ if and only if the two conditions in \prettyref{eq:cond1},\ref{eq:cond2} hold, but the function $J$ allows us to monitor progress and compare different planning algorithms. 

In rest of the paper, we first introduce the low-level grasp planner (\prettyref{sec:grasp}) and the push (\prettyref{sec:push}) planner. We then introduce a single high-level planner (\prettyref{sec:high}) that chooses greedy actions over multiple steps to minimize the cost function in a receding-horizon manner as outlined in \prettyref{alg:high_level}. The assumptions and completeness guarantees are provided in \ifarxiv \prettyref{sec:feasible} \else our extended report \cite{} \fi.
\section{\label{sec:grasp}Low-Level Grasp Planner}
We show that the one-step greedy grasping action can be found by solving a mixed-integer linear programming (MILP). We first define the radius of an object. If we compute a bounding circle for the $i$th object centered at $\E{o}_i$ with radius $r_i$, then we define $R=\fmax{i=1,\cdots,N}r_i$. We sample a set of potential positions $\E{p}_{mn}$ to put the grasped object, and we assume uniform sampling with a spacing equals to $\sqrt{2}R$, i.e. $\E{p}_{mn}\triangleq\TWO{\sqrt{2}Rm}{\sqrt{2}Rn}$. We define the set of reachable $\sqrt{2}R$-spaced samples that fall inside the $j$th target region as follows:
\begin{align*}
\E{S}_j=\{\E{p}_{mn}|B_{2R}(\E{p}_{mn})\subseteq\E{t}_j\wedge\E{reach}(\E{p}_{mn})=1\},
\end{align*}
where $B_R(\E{p}_{mn})$ is a ball centered at $\E{p}_{mn}$ with radius $R$.

To find the one-step greedy grasping action that reduces the cost as much as possible, we introduce binary variables $b_{ij}$ as in the cost model, where $b_{ij}=1$ implies that $\E{o}_i$ is not the object to be grasped and it is assigned to $\E{t}_j$. We further introduce another set of binary variables $p_{mnj}$ for each $\E{p}_{mn}$ and $j=1,\ldots,T$, where $p_{mnj}=1$ implies that an object will be grasped and put to the sampled location $\E{p}_{mn}$ and this grasped object will be assigned to $\E{t}_j$. After solving for $b_{ij},p_{mnj}$, we can identify the object $\E{o}_i$ to be grasped if $\sum_{j=1}^Tb_{ij}=0$ and we will move it to $\E{p}_{mnj}$ if $p_{mnj}=1$. Finally, we compute the gripper trajectory by calling $\E{traj}(\E{o}_i,\E{p}_{mn})$. We solve for $b_{ij},p_{mnj}$ using the following MILP:
\begin{small}
\begin{equation}
\begin{aligned}
\label{eq:MILP}
&\argmin{b_{ij}, p_{mnj}\in\{0,1\}} \mathcal{O}    \\
\ST&J_{post}\leq J(\E{o}_i) \\
&\sum_{i=1}^N\sum_{j=1}^T b_{ij}=N-1\quad
\sum_{mn}\sum_{j=1}^T p_{mnj}=1   \\
&\sum_{j=1}^Tb_{ij}\leq1\quad\sum_{l_i=k}b_{ij}=c_k(\E{t}_j)    \\
J_{post}\triangleq&\sum_{i=1}^N\sum_{j=1}^T b_{ij}\E{dist}(\E{o}_i,\E{t}_j)+
\sum_{mn}\sum_{j=1}^T p_{mnj}\E{dist}(\E{p}_{mn},\E{t}_j),
\end{aligned}
\end{equation}
\end{small}
where $J_{post}$ is the post-grasping cost. We have used three types of constraints. First, we ensure that cost is monotonically reduced. Second, we ensure that only one object will be grasped and the object will be put to only one sampled location. Finally, we have the assignment constraints (each object can only be assigned to one target region) and capacity constraints. The objective function $\mathcal{O}$ can take multiple forms. If we want to reduce the total cost as much as possible, then $\mathcal{O}=J_{post}$, and we denote the resulting grasping action as $\mathcal{G}(\E{o}_i,\E{p}_{mn})$. In this case, the function grasp(STATE) in \prettyref{alg:high_level} returns $\{\mathcal{G}(\E{o}_i,\E{p}_{mn})\}$ and contributes 1 to the branching factor. If we want to reduce the cost related to a single target region, e.g. $\E{t}_j$, then we can define:
\begin{align*}
\mathcal{O}=\sum_{i=1}^N b_{ij}\E{dist}(\E{o}_i,\E{t}_j)+\sum_{mn} p_{mnj}\E{dist}(\E{p}_{mn},\E{t}_j), 
\end{align*}
and we denote the resulting grasping action as $\mathcal{G}_j(\E{o}_i,\E{p}_{mn})$. In this case, grasp(STATE) returns $\{\mathcal{G}_1,\cdots,\mathcal{G}_T\}$ and contributes $T$ to the branching factor. Finally, we show in \ifarxiv \prettyref{sec:feasible} \else our extended report \cite{} \fi that $J_{cost}$ can be monotonically reduced to zero under mild assumptions, thereby providing a completeness guarantee.
\setlength{\textfloatsep}{4pt}
\begin{algorithm}[ht]
\caption{\label{alg:high_level} High-Level Planner}
\begin{small}
\begin{algorithmic}[1]
\Require Initial state $\text{STATE}\gets\{\E{o}_i\}$, max horizon $H$
\State Initialize stack $\text{STACK}\gets\{\text{STATE}\}$
\State Best action $\text{ACTION}^*\gets\text{None}$, $J_{rate}^*\gets\infty$
\While{$\text{STACK}$ not empty}
\State $\text{STATE}\gets\text{pop}(\text{STACK})$
\State $\{\text{ACTION}\}\gets\text{grasp}(\text{STATE})\cup\text{push}(\text{STATE})$
\For{$\text{ACTION}\in\{\text{ACTION}\}$}
\State \label{ln:sim}$\text{STATE}^+=\text{simulate}(\text{STATE},\text{ACTION})$
\If{$\text{horizon}(\text{STATE}^+)<H$}
\State $\text{STACK}\gets\text{STACK}\cup\{\text{STATE}^+\}$
\ElsIf{$J_{rate}(\text{STATE}^+)<J_{rate}^*$}
\State $\text{ACTION}^*\gets \text{backTrace}(\text{STATE}^+)$
\State $J_{rate}^*\gets J_{rate}(\text{STATE}^+)$
\EndIf
\EndFor
\EndWhile
\State Return $\text{ACTION}^*$
\end{algorithmic}
\end{small}
\end{algorithm}
\section{\label{sec:push}Low-Level Push Planner}
In this section, we propose a method to compute the one-step greedy pushing action. This problem is challenging as we are making decisions in a continuous action space that involves the pusher's location, pushing direction, and pushing distance. Indeed, even predicting the single object motion during pushing is non-trivial \cite{7487155}. To analyze multi-object motions, our method is based on the following two assumptions similar to \cite{TerryRuss2020WAFR}:
\begin{itemize}
    \item The pusher is rectangular and the pushing direction is orthogonal to the pusher.
    \item During pushing, objects will only translate along the pushing direction, no rotation or perpendicular translation will happen.
\end{itemize}
We illustrate some key notions in \prettyref{fig:push}(a). We assume that the pusher can only move in one of $D$ directions. For each pushing direction $\E{d}$, we define the affected region (gray) as the region formed by sweeping the pusher along $\E{d}$. Any object that falls entirely inside this region (blue) will be considered affected. There are boundary cases when objects fall partially in this region (red). We assume that objects of boundary cases will not be affected by the pushing action. Our push planner consists of two steps. First, we show that there are only discrete number of possible pusher locations that can be enumerated. Second, for each pusher's location, we compute the optimal pushing distance $d$.

\begin{SCfigure*}[][t]
\centering
\scalebox{0.76}{\includegraphics[width=0.95\textwidth]{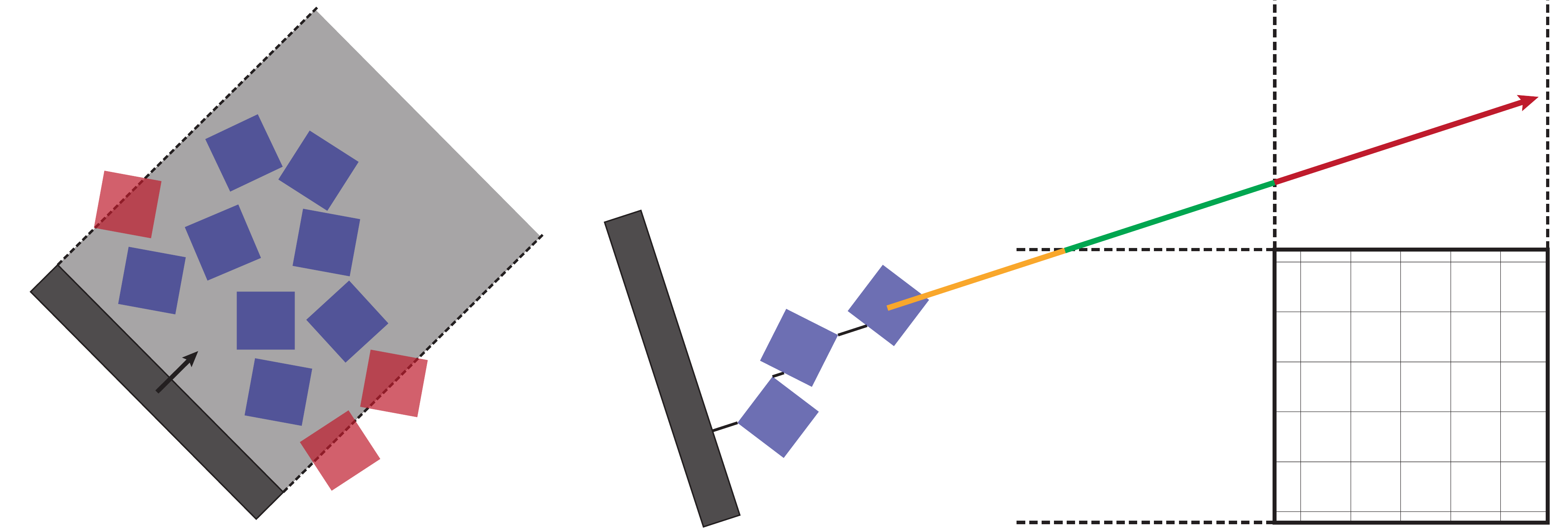}
\put(-450,150){(a)}
\put(-460,25){\small{Pusher}}
\put(-360,25){\small{Boundary Case}}
\put(-400,130){\small{Affected Region}}
\put(-420,45){$\E{d}$}
\put(-250,150){(b)}
\put(-85,65){\small{Target Region}}
\put(-165,65){\small{Voronoi Region I}}
\put(-165,145){\small{Voronoi Region II}}
\put(-85,145){\small{Voronoi Region III}}
\put(-262,38){$\bar{d}_{ia}$}
\put(-240,53){$\bar{d}_{ib}$}
\put(-223,50){$\bar{d}_{ic}$}
\put(-195,83){$J_{post}^{ia}$}
\put(-135,103){$J_{post}^{ib}$}
\put(-65,126){$J_{post}^{ic}$}}
\vspace{-10px}
\caption{\small{\label{fig:push} Illustration of our simplified kinematic model, which predicts the change of $J(\E{o}_i)$ as a function of pushing distance. We ignore the rotation of objects and only consider their linear motions along the pushing direction. Under this assumption, the cost function of each object is piecewise quadratic, where the quadratic pieces are dictated by the Voronoi regions of the target area.}
\vspace{-10px}}
\end{SCfigure*}

\subsection{Locating the Pusher}
For a pushing direction $\E{d}$, its orthogonal direction is denoted as $\E{d}^\perp$. A pusher's location is expressed as $\alpha\E{d}+\beta\E{d}^\perp$. We compute the two coefficients $\alpha,\beta$ by sorting objects' locations along $\E{d}$ and $\E{d}^\perp$. Since we assume that an object $\E{o}_i$ is a convex polygon, we can define its vertices as $\E{v}_i^1,\cdots,\E{v}_i^{V(\E{o}_i)}$, where $V(\E{o}_i)$ is the number of vertices in $\E{o}_i$. We then define the four supports of $\E{o}_i$ along $\E{d}$ and $\E{d}^\perp$ as:
\begin{align*}
\E{d}_{min/max(i)}&=\fmin{k=1,\cdots,V(\E{o}_i)}/\fmax{k=1,\cdots,V(\E{o}_i)}<\E{d},\E{v}_i^k> \\
\E{d}_{min/max(i)}^\perp&=\fmin{k=1,\cdots,V(\E{o}_i)}/\fmax{k=1,\cdots,V(\E{o}_i)}<\E{d}^\perp,\E{v}_i^k>.
\end{align*}
Similarly, the pusher is rectangular so it has four supports denoted as $\E{d}_{min/max(p)}$ and $\E{d}_{min/max(p)}^\perp$. We then record all $\beta$ values satisfying $\beta+\E{d}_{min/max(p)}^\perp=\E{d}_{min/max(i)}^\perp$ for some $i$ and we sort these key $\beta$ values in ascending order denoted as $\beta_1\leq\beta_2\leq\cdots\leq\beta_{4N}$, where there are at most $4N$ cases. When the pusher moves between $\beta_n$ and $\beta_{n+1}$ along $\E{d}^\perp$, the set of affected objects is invariant and denoted as:
\begin{small}
\begin{align*}
\mathcal{A}_n^\perp=\{\E{o}_i|
\beta_n+\E{d}_{min(p)}^\perp\leq 
\E{d}_{min(i)}^\perp\leq\E{d}_{max(i)}^\perp\leq 
\beta_{n+1}+\E{d}_{max(p)}^\perp\}.
\end{align*}
\end{small}
We repeat this procedure along $\E{d}$ to define the $4N$ key values $\alpha_1\leq\alpha_2\leq\cdots\leq\alpha_{4N}$, and the set of affected objects:
\begin{small}
\begin{align*}
\mathcal{A}_m=\{\E{o}_i|\alpha_m+\E{d}_{max(p)}\leq \E{d}_{min(i)}\}.
\end{align*}
\end{small}
Note that the definition of $\mathcal{A}_m$ is different from $\mathcal{A}_n^\perp$ in that we only consider objects in front of the pusher, as illustrated in In \prettyref{fig:push} (a). Finally, we define an additional set of objects overlapping the pusher as:
\begin{small}
\begin{align*}
\mathcal{I}_m=\{\alpha_m+\E{d}_{min(p)}\leq 
\E{d}_{min(i)}\leq\E{d}_{max(i)}\leq 
\alpha_{m+1}+\E{d}_{max(p)}\}.
\end{align*}
\end{small}
Given these notations, we summarize that a possible pusher location $\alpha\E{d}+\beta\E{d}^\perp$ must satisfy the following conditions:
\begin{align*}
&\alpha\in[\alpha_m,\alpha_{m+1}]\quad
\beta\in[\beta_n,\beta_{n+1}]   \\
&\mathcal{A}_m\cap\mathcal{A}_n^{\perp}\neq\emptyset\quad
\mathcal{I}_m\cap\mathcal{A}_n^{\perp}=\emptyset,
\end{align*}
where there are at most $16N^2$ choices. To further reduce the computational cost, we can remove one of the case, if two cases have the same set of affected objects, i.e. $\mathcal{A}_m\cap\mathcal{A}_n^{\perp}$.

\setlength{\textfloatsep}{4pt}
\begin{algorithm}[ht]
\caption{\label{alg:dbar} Computing the compression distance $\bar{d}_i$}
\begin{small}
\begin{algorithmic}[1]
\State $\bar{d}_i\gets\E{d}_{min(i)}-\alpha-\E{d}_{max(p)}$
\For{Each vertices $\E{v}_i^k$, $k=1,\cdots,V(\E{o}_i)$}
\State Shoot a ray from $\E{v}_i^k$ along $-\E{d}$, record first intersection.
\If{Ray intersects object $\E{o}_j$ after traveling $\bar{d}_i^k$}
\If{$\E{o}_j\in\mathcal{A}_m\cap\mathcal{A}_n$}
\State $\bar{d}_i\gets\E{min}(\bar{d}_i,\bar{d}_i^k+\bar{d}_j)$\Comment{Recursion}
\EndIf
\EndIf
\EndFor
\State Return $\bar{d}_i$
\end{algorithmic}
\end{small}
\end{algorithm}
\subsection{Finding the Optimal Pushing Distance}
For a given $\E{d},\alpha\in\mathcal{A}_m,\beta\in\mathcal{A}_n^\perp$, we plan the optimal pusher distance that reduces $J(\E{o}_i)$ the most. We first solve \prettyref{eq:cost} to find $b_{ij}=1$, i.e. an affected object $\E{o}_i$ is assigned to the target region $\E{t}_j$. If we move the pusher by distance $d$, then we need to compute the following post-pushing cost function:
\begin{align*}
J_{post}^i(d)\triangleq\E{dist}(\E{o}_i(d),\E{t}_j).
\end{align*}
If $J_{post}^i(d)$ can be expressed analytically, then we can find the optimal pushing distance by solving:
\begin{equation}
\begin{aligned}
\label{eq:pushOpt}
d^*=\argmin{d}&\mathcal{O} \\
\ST&\sum_{\E{o}_i\in\mathcal{A}_m\cap\mathcal{A}_n^\perp}J_{post}^i(d)\leq J(\E{o}_i) \\
&d\in\E{range}(\alpha\E{d}+\beta\E{d}^\perp,\E{d}),
\end{aligned}
\end{equation}
where we have added a constraint to ensure monotonic cost reduction. Similar to the case with grasping actions, the objective function $\mathcal{O}$ can take multiple forms. Similar to the grasp planner, the push planner can be used in two ways. If we want to reduce the overall cost function, we can set $\mathcal{O}=\sum_{\E{o}_i\in\mathcal{A}_m\cap\mathcal{A}_n^\perp}J_{post}^i(d)$ and denote the resulting pushing action as $\mathcal{P}(\E{d},\alpha,\beta,d^*)$. In this case push(STATE) in \prettyref{alg:high_level} returns $\{\mathcal{P}\}$ and contributes to the 1 branching factor. If we want to reduce the cost related to a single target region, e.g. $\E{t}_j$, we can set $\mathcal{O}=\sum_{\E{o}_i\in\mathcal{A}_m\cap\mathcal{A}_n^\perp\wedge b_{ij}=1}J_{post}^i(d)$ and denote the resulting pushing action as $\mathcal{P}_j(\E{d},\alpha,\beta,d^*)$. In this case push(STATE) in \prettyref{alg:high_level} returns $\{\mathcal{P}_1,\cdots,\mathcal{P}_T\}$ and contributes $T$ to the branching factor.

\setlength{\textfloatsep}{4pt}
\begin{algorithm}[ht]
\caption{\label{alg:push} Push planner}
\begin{small}
\begin{algorithmic}[1]
\State Solution $<\alpha^*,\beta^*,d^*>\gets\text{None}$, best $\mathcal{O}^+\gets\infty$
\State Compute all possible ranges $\{[\alpha_m,\alpha_{m+1}]\}$ and $\{[\beta_n,\beta_{n+1}]\}$
\For{Each $[\alpha_m,\alpha_{m+1}]\in\{[\alpha_m,\alpha_{m+1}]\}$}
\For{Each $[\beta_n,\beta_{n+1}]\in\{[\beta_n,\beta_{n+1}]\}$}
\State $\mathcal{O}(d)=0$\Comment{Build objective}
\For{$i=1,\cdots,N$}
\LineComment{Only consider objects in affected region}
\LineComment{Only consider objects in front of pusher}
\If{$i\in\text{affected}([\alpha_m,\alpha_{m+1}],[\beta_n,\beta_{n+1}])$}
\State Compression distance $\bar{d}_i$ (\prettyref{alg:dbar})
\State $\mathcal{O}(d)\gets\mathcal{O}(d)+J_{post}^i(d)$
\EndIf
\EndFor
\State Solve \prettyref{eq:pushOpt} for $\mathcal{O}^+,d^+$\Comment{Minimize objective}
\If{$\mathcal{O}^+<\mathcal{O}^*$}
\State $<\alpha^*,\beta^*,d^*>\gets<\frac{\alpha_m+\alpha_{m+1}}{2},\frac{\beta_m,\beta_{m+1}}{2},d^+>$
\State $\mathcal{O}^*\gets\mathcal{O}^+$
\EndIf
\EndFor
\EndFor
\State Return $<\alpha^*,\beta^*,d^*>$
\end{algorithmic}
\end{small}
\end{algorithm}
To solve for the global minima of the above 1D optimizations analytically, we show that each $J_{post}^i(d)$ is piecewise quadratic and so is their summation. As a result, the 1D optimization can be solved by enumerating and finding the global minima of each quadratic piece. The first quadratic piece is denoted as the void piece with length $\bar{d}_i$, i.e. $J_{post}^i(d)=J_{post}^i(0)$ if $0\leq d\leq \bar{d}_i$. The length $\bar{d}_i$ is denoted as the compression distance, i.e. the minimal distance that we have to move the pusher in order to touch the object. In the illustrative example of \prettyref{fig:push}(b), we have $\bar{d}_i=\bar{d}_{ia}+\bar{d}_{ib}+\bar{d}_{ic}$. $\bar{d}_i$ can be computed analytically by the recursive raycasting \prettyref{alg:dbar}.
If the pushing distance is larger than $\bar{d}_i$, then the distance $\E{dist}(\E{o}_i(d),\E{t}_j)$ will change according to the Voronoi region of $\E{t}_j$ that $\E{o}_i$ belongs \cite{mirtich1998v}. Within each Voronoi region, $\E{dist}(\E{o}_i(d),\E{t}_j)$ is a quadratic function of $d$. In the planar case, there are only two types of Voronoi regions, corresponding to vertex and edge, respectively. In the example of \prettyref{fig:push}(b), we illustrate three quadratic pieces with $J_{post}^{1a},J_{post}^{1c}$ corresponding to edge regions and $J_{post}^{1b}$ to a vertex region. The dividing points between regions can be determined by computing the intersections between Voronoi region boundaries and the object's moving path. 

We summarize our push planner \prettyref{alg:push} by estimating the computational complexity. For each pushing direction, our planner first enumerates possible pusher locations, where there are at most $16N^2$ cases. For each case, there are at most $N$ objects in the affected set. Each object contributes a piecewise quadratic cost model, with at most $1+2\fmax{j=1,\cdots,T}V(\E{t}_j)$ pieces, where $V(\E{t}_j)$ is the number of vertices of $\E{t}_j$. After summing up $J_{post}^i$ to get $J_{post}$, it has at most $N(1+2\fmax{j=1,\cdots,T}V(\E{t}_j))$ pieces. If we assume solving for each quadratic piece takes constant time, then our algorithm has the following complexity: $\mathcal{O}(16DN^3(1+2\fmax{j=1,\cdots,T}V(\E{t}_j)))$. Note that the complexity in practice is much lower than this upper bound because many pusher locations are intersecting objects and thus pruned.
\section{\label{sec:high}High-Level Receding-Horizon Planner}
We can perform synergetic planning using low-level planners alone, by first computing the one-step greedy actions, $\mathcal{G},\mathcal{P}$, and then picking the action with larger cost reduction. But this strategy has two drawbacks. First, our push planner is based on a simplified kinematic model, which might suffer from a high approximate error. Second, our cost model in low-level planner does not take transit cost into consideration, which can slow down the overall efficacy \cite{koga1994multi}. 

Our high-level planning \prettyref{alg:high_level} mitigates these two drawbacks. First, we use Box2D \cite{catto2010box2d} to simulate $\mathcal{P}$ and compute a more accurate cost reduction by solving \prettyref{eq:cost} before and after each simulation. Second, our high-level planner considers the transit cost $J_{transit}$ and seeks to maximize the following modified cost function:
\begin{align}
\label{eq:transit}
J_{rate}=(J(\E{o}_i)-J_{post})/J_{transit},
\end{align}
i.e. the rate of cost reduction per unit end-effector movement of the gripper. To effectively reduce $J_{rate}$, we have to expand the decision space. We observe that, when using low-level planners alone, the gripper will suffer from unnecessary transits by jumping between target regions (i.e. grasping an object to $t_i$, pushing another object to $t_j$, and then grasping a third object to $t_i$ again). To avoid this artifact, we choose to not use the overall greedy actions $\mathcal{G},\mathcal{P}$, but use the actions focused on a single target region, i.e. $\mathcal{G}_j,\mathcal{P}_j$. In other words, we allow the gripper to reduce the cost of one target region as much as possible, before transiting to another target region. In addition, our high-level planner seeks to reduce \prettyref{eq:transit} over multiple steps via an action-tree search. Whenever a push action is used in a tree node, then the state after the push is predicted using Box2D \cite{catto2010box2d} (\prettyref{ln:sim} of \prettyref{alg:high_level}). The branching factor of this search is $2T$, as we choose one action from the set $G_{1,\cdots,T},P_{1,\cdots,T}$. We can further reduce the branching factor by half if we choose between grasping or pushing actions greedily for each target region. This greediness does not degradate the overall performance empirically.
\section{\label{eq:results}Evaluations}
\begin{figure*}[t]
\vspace{-5px}
\centering
\includegraphics[width=0.99\textwidth]{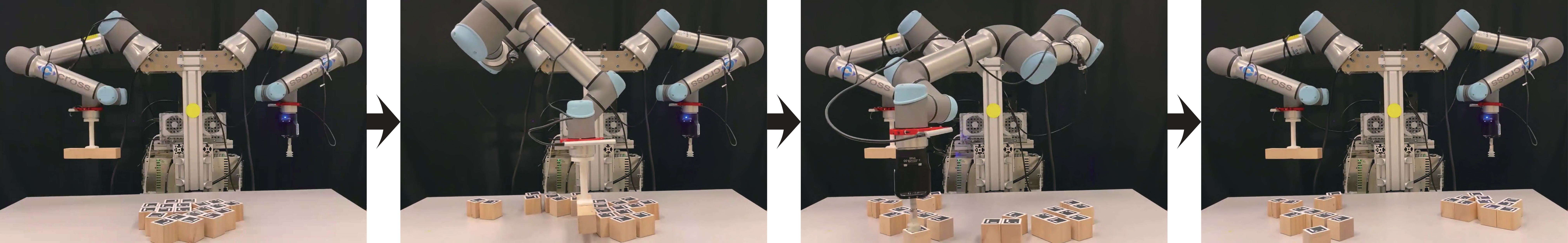}
\caption{\label{fig:TRINA} From left to right: initial configuration, pushing action, grasping action, final configuration.}
\vspace{-5px}
\end{figure*}
We implement our method using mixed Python/C++, where we use C++ to perform multi-threaded distance computations as in \prettyref{fig:push}. All the experiments are conducted on a desktop machine with a 10 core Intel(R) Xeon(R) W-2155 CPU. We evaluate our method in a simulated environment with a STAUBLI 6-axis Industrial robotic Arm TX90 as well as the bimanual hardware platform in \prettyref{fig:TRINA} equipped with a $8$cm-long pushing bar and Robotiq's vacuum grippers. The simulation is performed using ODE \cite{smith2005open} where the control signals are provided by a PID controller. Our goal is to push $N=50-200$ cubical bricks ($5\times5\times5cm^3$) to $T=1-4$ target regions ($100\times50cm^2$).

\begin{figure}[t]
\vspace{-5px}
\centering
\setlength\tabcolsep{1pt}
\renewcommand{\arraystretch}{0.0}
\begin{tabular}{cc}
\includegraphics[trim=5mm 0mm 5mm 5mm,clip,width=0.48\linewidth]{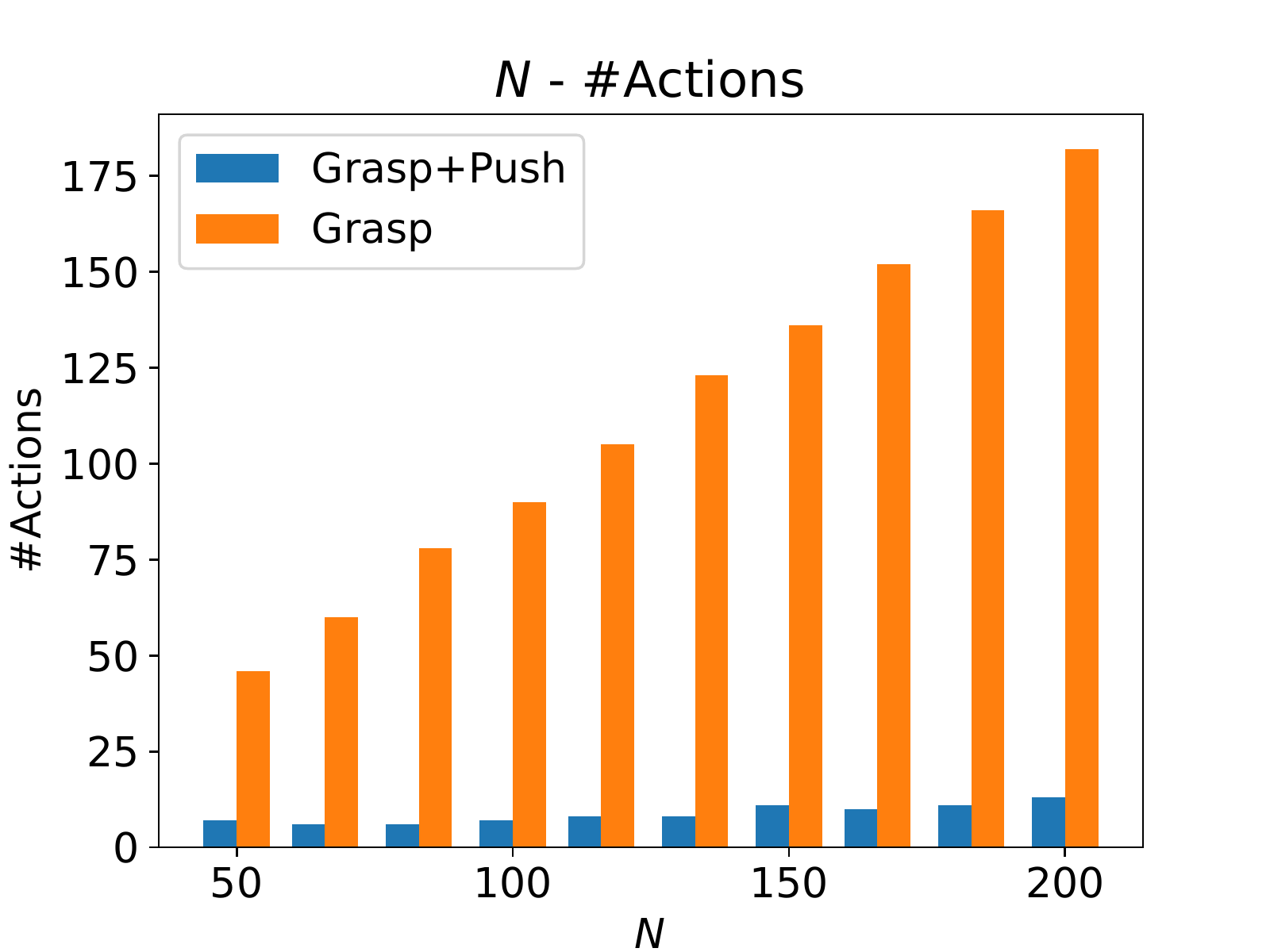} &
\includegraphics[trim=5mm 0mm 5mm 5mm,clip,width=0.48\linewidth]{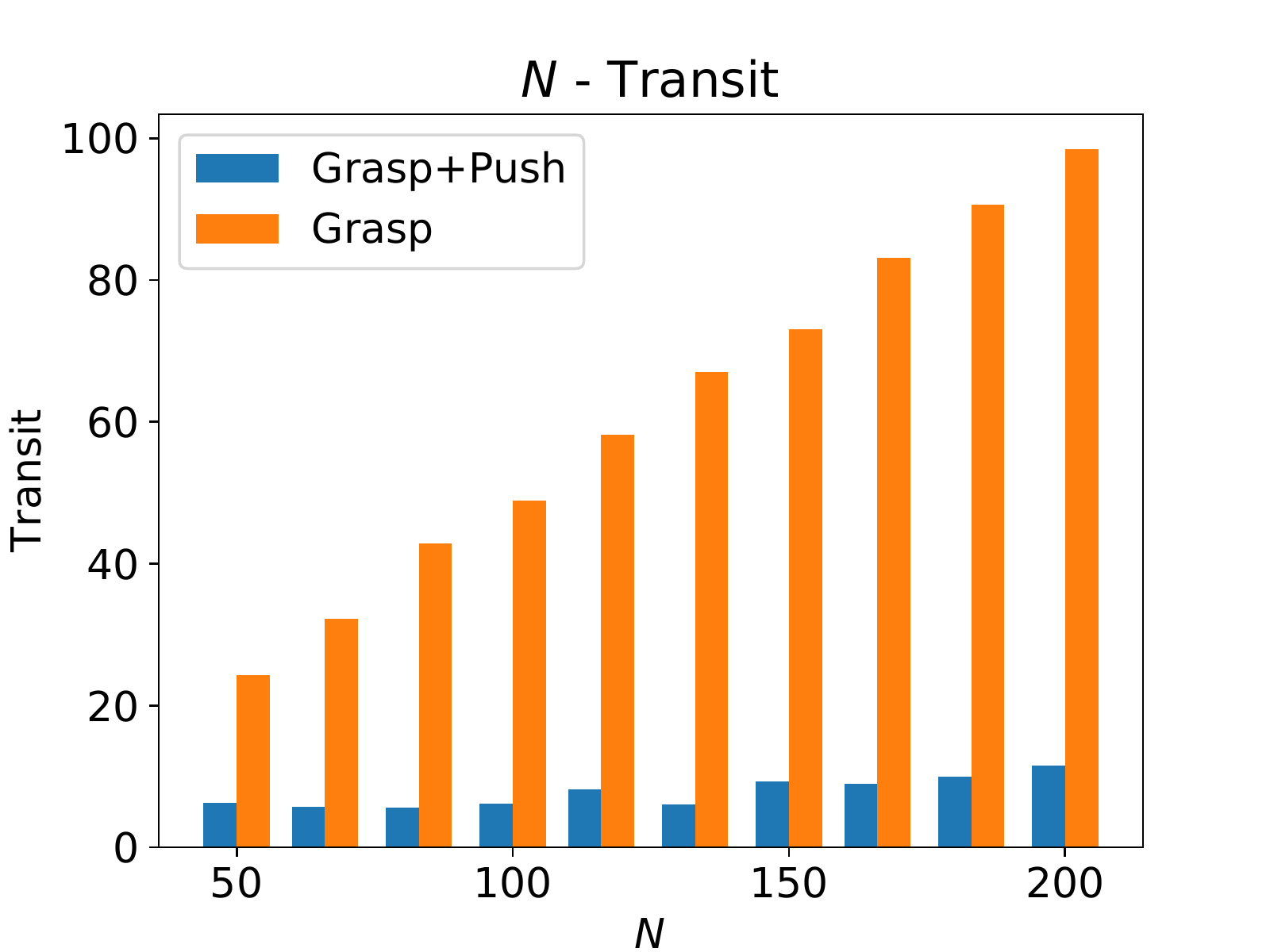}   \\
\includegraphics[trim=5mm 0mm 5mm 5mm,clip,width=0.48\linewidth]{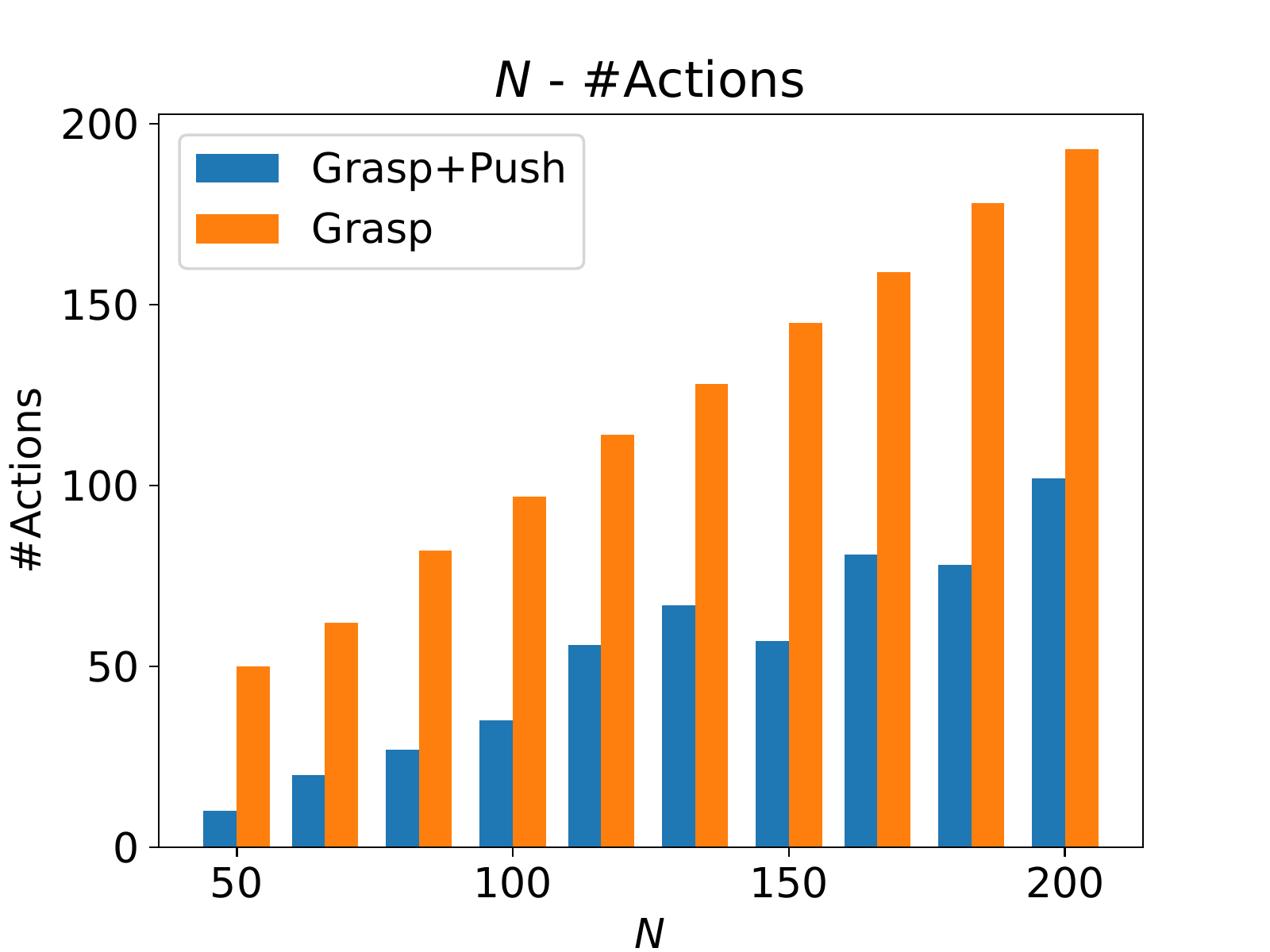} &
\includegraphics[trim=5mm 0mm 5mm 5mm,clip,width=0.48\linewidth]{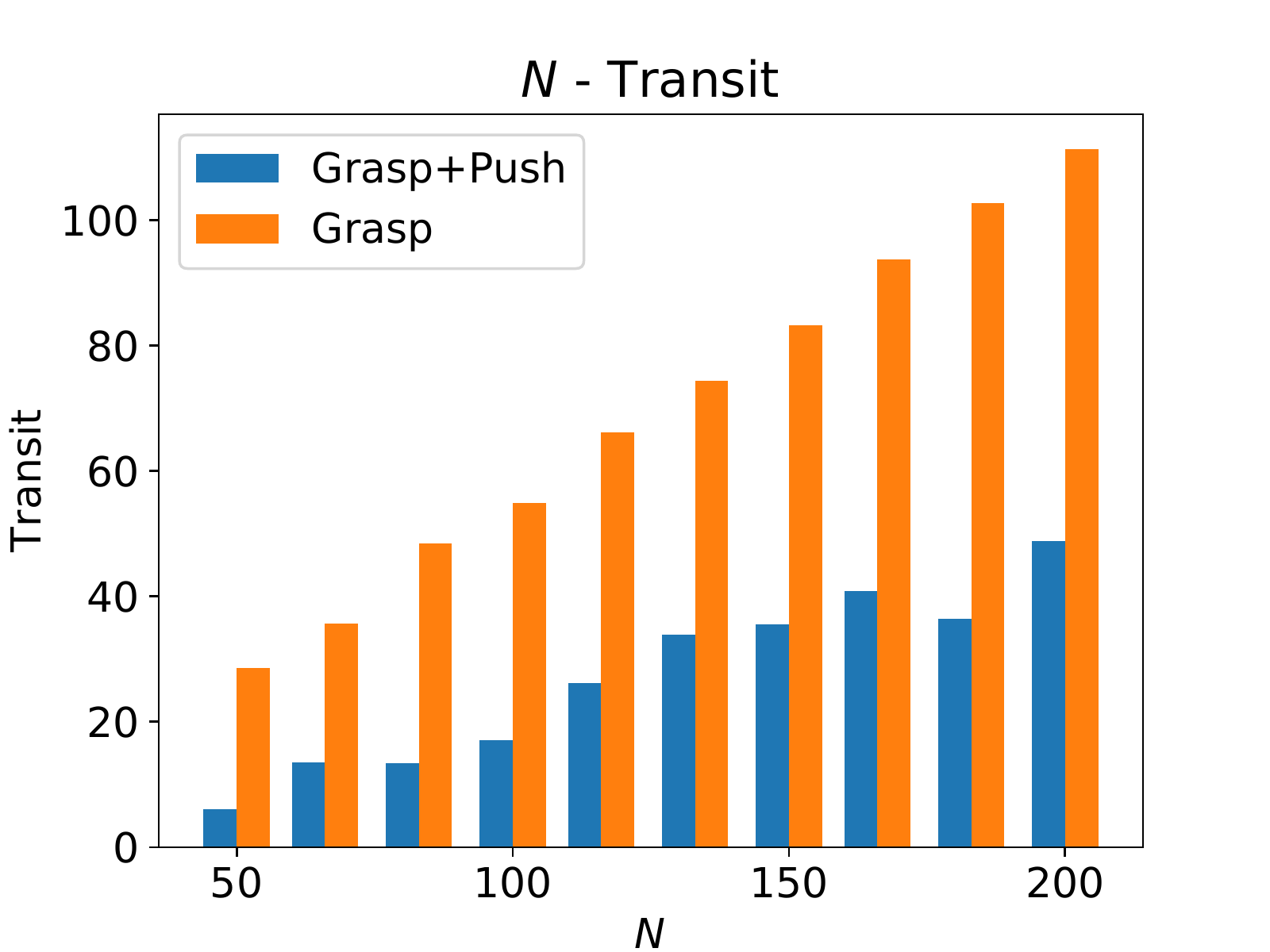}  \\
\end{tabular}
\caption{\label{fig:woPush} We show the speedup/number-of-actions/transit-cost with/without pushing actions, plotted against the number of objects. Top Row: $T=2$, $C=1$, $c_0(\E{t}_j)=N/2$; Bottom Row: $T=2$, $C=N$.}
\end{figure}
\TE{Speedup Using Pushing Actions:} In \prettyref{fig:woPush}, we show the speedup using grasping+pushing actions, as compared with grasping actions alone. We use two settings: unlabeled ($C=1$) and labeled categories ($C=N$). When $C=1$, the robot are mostly using pushing actions, and the pushing action could provide $6-15\times$ speedup in terms of number of actions and $3-10\times$ speedup in terms of the transit cost. When $C=N$, the robot is forced to use more grasping actions due to fixed assignment, and the pushing actions could only provide $1.4-3.1\times$ speedup in terms number of actions and $1.6-4.2\times$ speedup in terms of the transit cost. In each test case, the initial object poses are sampled randomly. Some of these objects are out of the reach for the gripper, so the number of grasps is not a perfect linear function of $N$.

\begin{figure}[ht]
\vspace{-5px}
\centering
\setlength\tabcolsep{1pt}
\renewcommand{\arraystretch}{0.0}
\begin{tabular}{cc}
\includegraphics[trim=5mm 0mm 5mm 5mm,clip,width=0.48\linewidth]{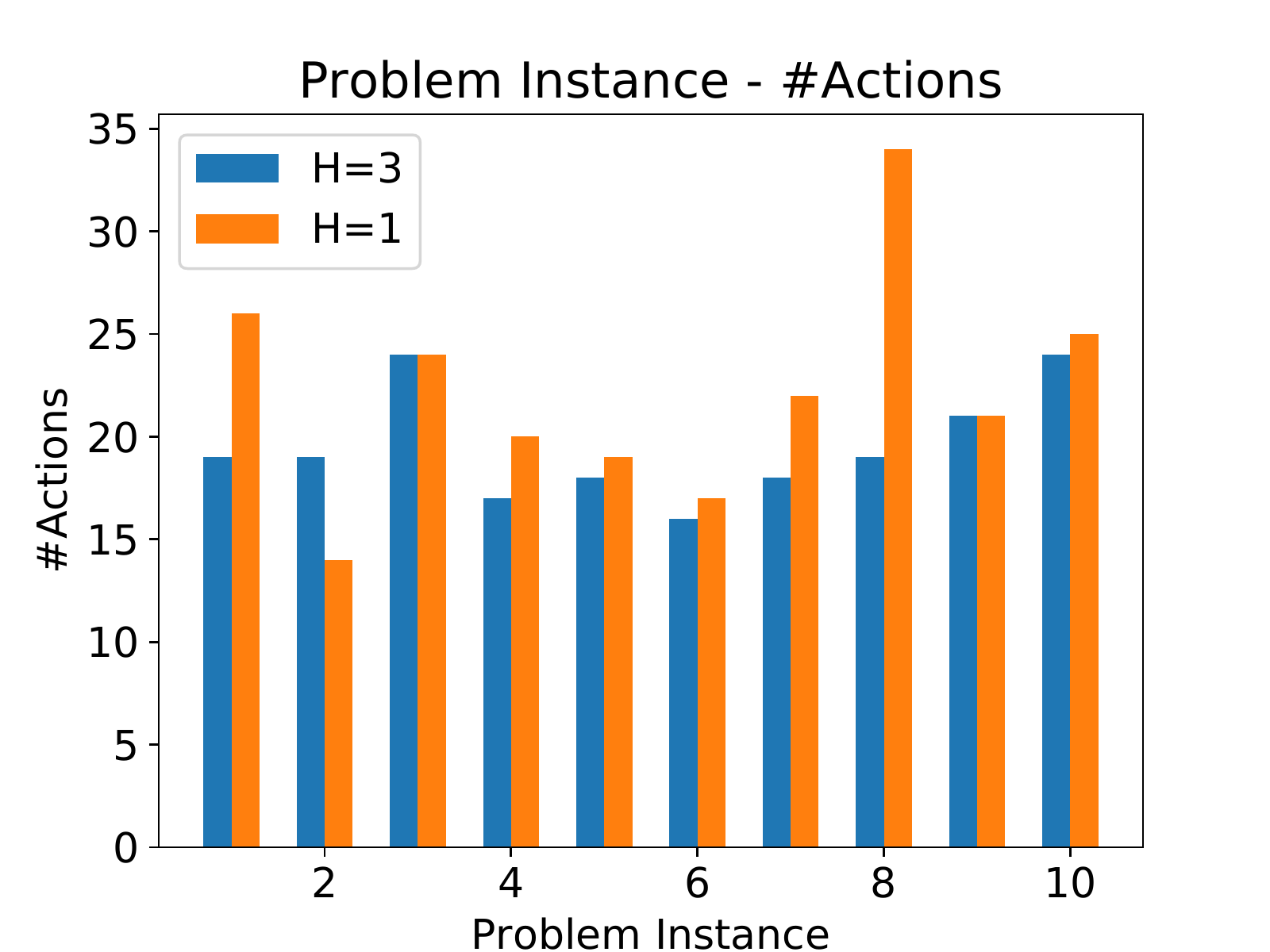} &
\includegraphics[trim=5mm 0mm 5mm 5mm,clip,width=0.48\linewidth]{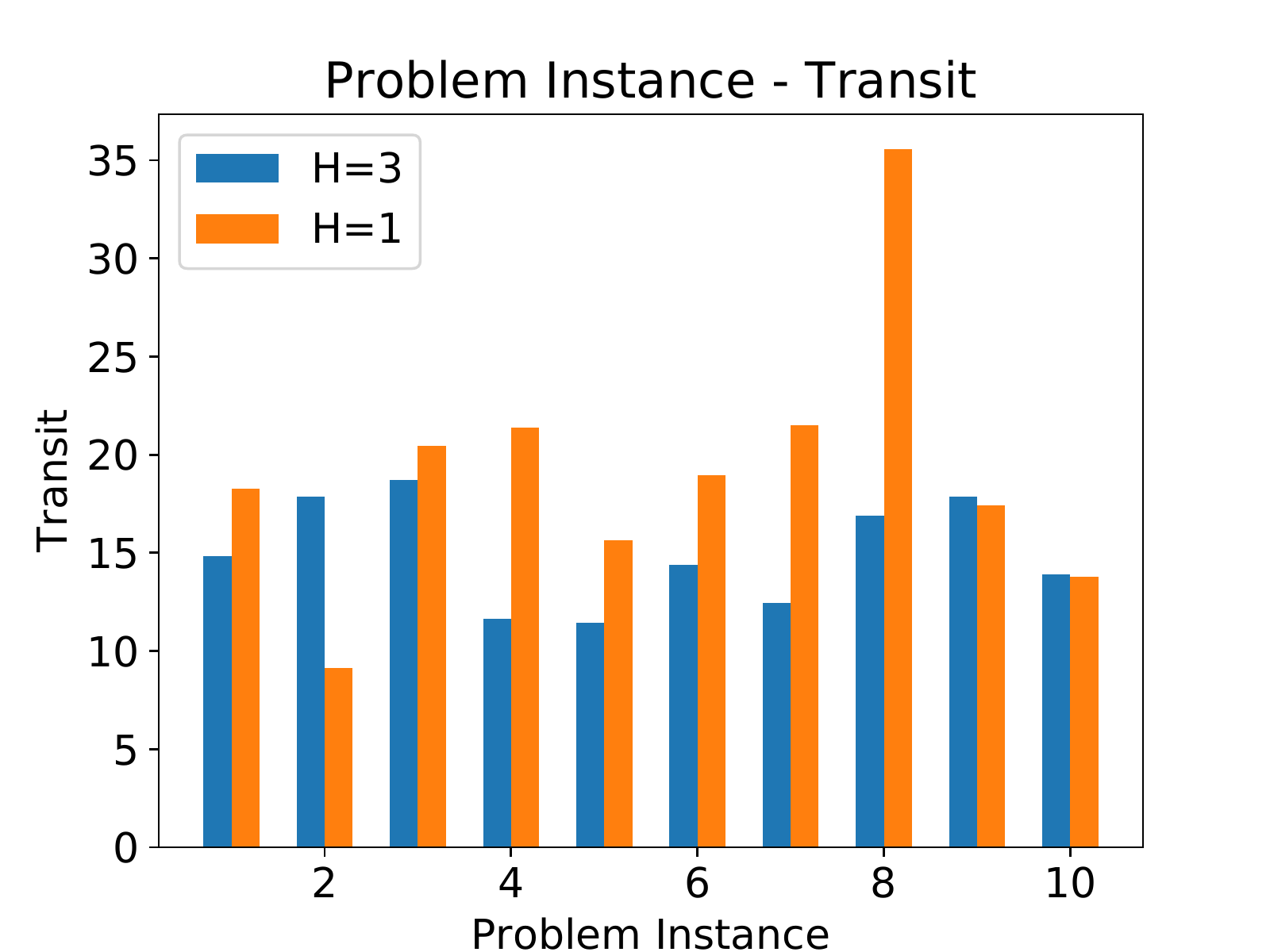}
\end{tabular}
\caption{\label{fig:woHighLevel} On 10 randomly generated sorting tasks, we show the speedup/number-of-actions/transit-cost with/without high-level planner ($H$ is the high-level planning horizon, $H=1$ means no high-level planner used).}
\end{figure}
\TE{Speedup Using The High-Level Planner:} In \prettyref{fig:woHighLevel}, we show the speedup with/without using the high-level planner. We randomly generate $10$ sorting tasks with parameters sampled uniformly in range: $50\leq N\leq200$, $2\leq T\leq4$, $C=2$, $c_0(\E{t}_j)=0.8N/2$, $c_1(\E{t}_j)=0.2N/2$. We observe that the high-level planner does not help reducing the number of actions except in one task. However, the high-level planner does reduce the transit cost in most cases, achieving at most $2\times$ speedup. Using $H=3$ increases the computational time of each decision making by $8s$ as compared with $H=1$. We observe that further increasing $H$ does not worth the extra computational time.

\section{\label{eq:conclusion}Conclusion \& Limitations}
We propose a synergetic push-grasp planner for large-scale object sorting tasks. Our planner uses the grasping action to ensure feasibility of the task, and we use pushing actions to accelerate the execution. We show that one-step greedy grasping actions can be found by solving MILP, and with the help of a simplified kinematic model, one-step greedy pushing actions can be found by analyzing and enumerating pusher configurations. Finally, we take the transit cost into consideration using a high-level planner to perform multi-step action selection. As a major advantage, our method is fully analytic and does not require any parameter tuning, as compared with prior learning-based methods \cite{Song2019,Zeng2018}.

Our method can be further improved in three ways. First, our method assumes perfect sensing and requires the exact knowledge of object configurations. In practice, objects can be occluded and thus cannot be localized exactly, in which case the two low-level planner should be modified to account for uncertainties. Second, our simplified kinematic model is similar to \cite{TerryRuss2020WAFR}, which assumes that characteristic length of each object is much smaller than that of the pusher or the target region size. If larger objects are sorted, our assumptions on object motions during pushing will be violated. Third, although we have shown that grasping actions are feasible for object sorting, the pushing actions can violate this guarantee. This is because objects might be pushed too far away to leave the reachable set of the gripper. In practice, a hardware-side or software-side safety mechanism can be implemented to bound the objects to the reachable set.
\section{Acknowledgement}
This work is partially funded by NSF Grant \#2025782.
\clearpage
\bibliographystyle{IEEEtranS}
\bibliography{references}
\ifarxiv
\section{\label{sec:feasible}Appendix: Feasibility Guarantee}
We call a sampled location $\E{p}_{mn}$ a buffer location if the following condition holds:
\begin{align*}
\E{p}_{mn}\notin\E{t}_j\wedge
\E{dist}(\E{p}_{mn},\E{t}_j)<\fmin{i\neq j}\E{dist}(\E{t}_i,\E{t}_j)\quad\forall j.
\end{align*}
The buffer location could be understood as a generalized center point of all the target regions, with a smaller distance to any region than any other region (dashed red region in \prettyref{fig:buffer}). Note that if $\E{dist}$ is the Euclidean distance, then buffer locations might not always be available depending on the positions of target regions. In these cases, we can simply pick any reachable and collision-free position and pretend it is a buffer location by setting all $\E{dist}(\E{p}_{mn},\E{t}_j)=0$ in \prettyref{eq:MILP}. With available buffer locations, it is unsurprising that grasping is feasible in solving most object sorting problems, as many prior works using only grasping actions. We formalize this result in the following lemma:
\begin{SCfigure}[][t]
\centering
\includegraphics[width=0.24\textwidth]{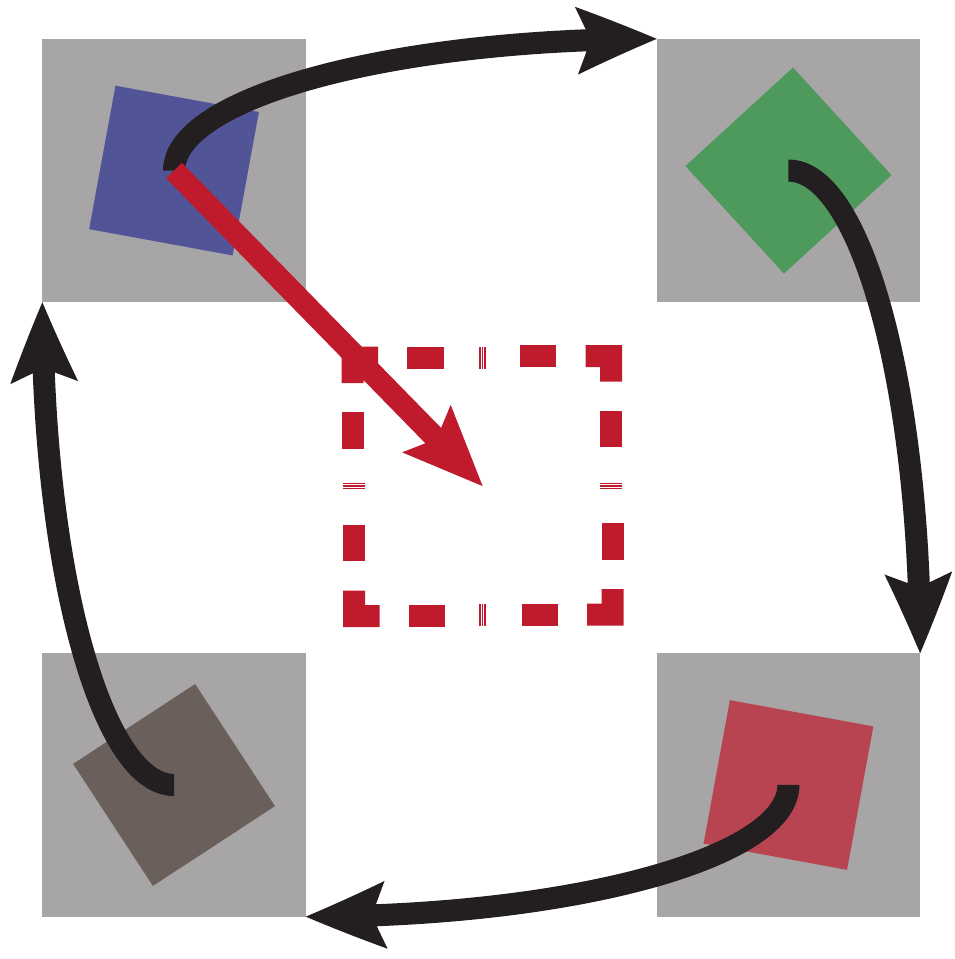}
\vspace{-5px}
\caption{\label{fig:buffer}\small{We have 4 target regions each occupied with an object, but these objects have to be moved to a neighboring region (black arrow). If a buffer location is available (dashed red), then one-step greedy grasping action is still feasible, because moving any object to the buffer location (red arrow) strictly reduce $J$.}}
\end{SCfigure}
\begin{lemma}
\label{lem:feas}
If all objects are reachable, i.e. $\E{reach}(\E{o}_i)=1$, enough sampled locations exist in each target region, i.e. $|\E{S}_j|>9\sum_{k=1}^C c_k(\E{t}_j)$ for all $j$, and a buffer location $\bar{\E{p}}_{mn}$ can always be found such that $\E{reach}(\bar{\E{p}}_{mn})=1$ and $B_R(\bar{\E{p}}_{mn})$ does not overlap any objects, then the object sorting task is feasible using one-step greedy grasping actions.
\end{lemma}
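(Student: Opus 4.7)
The plan is to show by induction on the number of greedy grasping actions taken that whenever $J(\E{o}_i)>0$ the MILP \prettyref{eq:MILP} admits a feasible solution with $J_{post}<J(\E{o}_i)$; since the MILP returns the globally optimal grasp, its output inherits this strict decrease. Termination then follows because every object, once grasped, lies at one of the finitely many samples $\E{p}_{mn}$ or at the buffer $\bar{\E{p}}_{mn}$; the overall configuration therefore belongs to a finite set, on which a strictly decreasing $J$ cannot cycle and must ultimately reach $0$, i.e., conditions \prettyref{eq:cond1} and \prettyref{eq:cond2} hold.

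To construct the witness action, I would fix an optimal assignment $b^*_{ij}$ realizing the current $J(\E{o}_i)$ and select $(i^*,j^*)$ with $b^*_{i^*j^*}=1$ and $\E{o}_{i^*}\notin\E{t}_{j^*}$, which exists because $J>0$. In the easy case (Case A) some sample $\E{p}_{mn}\in\E{S}_{j^*}$ is currently free, meaning $B_R(\E{p}_{mn})$ overlaps no object other than possibly $\E{o}_{i^*}$. Then setting $b_{i^*j^*}=0$, $p_{mnj^*}=1$, and keeping the remaining $b^*$ values gives a feasible MILP solution whose cost differs from $J(\E{o}_i)$ only by replacing the positive term $\E{dist}(\E{o}_{i^*},\E{t}_{j^*})$ by $\E{dist}(\E{p}_{mn},\E{t}_{j^*})=0$, so $J_{post}<J(\E{o}_i)$.

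In the harder case (Case B) no sample in $\E{S}_{j^*}$ is free. Here I would invoke a geometric counting argument: because $B_{2R}(\E{p}_{mn})\subseteq\E{t}_{j^*}$ for every $\E{p}_{mn}\in\E{S}_{j^*}$, only objects whose centers lie inside $\E{t}_{j^*}$ can block those samples, and each such object blocks at most the $9$ lattice points of the $\sqrt{2}R$-grid that fall within distance $2R$ of its center. Hence the absence of a free sample forces more than $\sum_{k=1}^{C} c_k(\E{t}_{j^*})$ objects into $\E{t}_{j^*}$, while $b^*$ assigns only $\sum_{k=1}^{C} c_k(\E{t}_{j^*})$ of them there; therefore some $\E{o}_{i'}\in\E{t}_{j^*}$ has $b^*_{i'j'}=1$ for $j'\neq j^*$. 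Grasping $\E{o}_{i'}$ to $\bar{\E{p}}_{mn}$ replaces $\E{dist}(\E{o}_{i'},\E{t}_{j'})$ with $\E{dist}(\bar{\E{p}}_{mn},\E{t}_{j'})$, and the buffer property together with $\E{o}_{i'}\in\E{t}_{j^*}$ yields
\begin{align*}
\E{dist}(\bar{\E{p}}_{mn},\E{t}_{j'}) &< \fmin{k\neq j'}\E{dist}(\E{t}_k,\E{t}_{j'})\\
&\leq \E{dist}(\E{t}_{j^*},\E{t}_{j'}) \leq \E{dist}(\E{o}_{i'},\E{t}_{j'}),
\end{align*}
so $J_{post}<J(\E{o}_i)$ once again.

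The main obstacle I expect is the Case~B counting step: the constant $9$ has to be justified carefully (it is tight only when an object center lands exactly on a grid intersection, in which case all four cardinal and four diagonal lattice neighbors sit within distance $2R$), and one must confirm via the $B_{2R}$ inclusion that objects whose centers lie outside $\E{t}_{j^*}$ cannot block any sample in $\E{S}_{j^*}$. A secondary subtlety is ensuring that the monotonicity constraint $J_{post}\leq J(\E{o}_i)$ in \prettyref{eq:MILP} does not render the MILP infeasible at intermediate states, which is precisely what the Case~A/Case~B dichotomy rules out.
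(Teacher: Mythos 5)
Your proof is correct, and it diverges from the paper's argument in the one place that matters. Both proofs share the same skeleton: induct on the strict decrease of $J$, exhibit a feasible witness for the MILP whenever $J>0$, and split on whether $\E{S}_{j^*}$ contains a free sample (your Case~A merges the paper's Induction Cases~I and~II, which is harmless since the action and the cost bookkeeping are identical whether $\E{o}_{i^*}$ sits outside all regions or inside a wrong one). The difference is in the full-region case. The paper builds a dependency chain $j_1\to j_2\to\cdots\to j_Q$ across regions, terminating either at a region with a free sample or at a loop that is broken by cycling objects through the buffer. You instead run the $9$-per-object counting argument in the contrapositive: if every sample of $\E{S}_{j^*}$ is blocked, then (because $B_{2R}(\E{p}_{mn})\subseteq\E{t}_{j^*}$ forces every blocker's center into $\E{t}_{j^*}$, and each blocker covers at most $9$ lattice points) the region holds strictly more than $\sum_k c_k(\E{t}_{j^*})$ objects, so by pigeonhole one of them is assigned elsewhere, and the buffer inequality chain $\E{dist}(\bar{\E{p}}_{mn},\E{t}_{j'})<\fmin{k\neq j'}\E{dist}(\E{t}_k,\E{t}_{j'})\leq\E{dist}(\E{t}_{j^*},\E{t}_{j'})\leq\E{dist}(\E{o}_{i'},\E{t}_{j'})$ gives the strict decrease in a single move. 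This buys you two things: you never need the chain construction or the loop case at all, and every improving action is genuinely one grasp (the paper's Case~III-B, as written, describes a cyclic multi-object shuffle, which sits awkwardly with the ``one-step greedy'' claim even though its first move alone suffices). The cost is that your witness reaches for the buffer whenever a region is saturated, where the paper's Case~III-A would sometimes find a free region at the end of the chain and avoid the buffer --- irrelevant for feasibility, since the greedy MILP will do at least as well as either witness. Your termination argument (finitely many post-grasp configurations, so a strictly decreasing $J$ cannot recur and must reach $0$) is also more explicit than the paper's appeal to ``induction on the monotonic reduction of $J$,'' and is worth keeping.
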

\begin{proof}
We prove by induction on the monotonic reduction of $J(\E{o}_i)$. \TE{Base Case:} If $J(\E{o}_i)=0$, then the task is feasible. Otherwise, we must have $b_{ij}=1$ for some $\E{o}_i\notin\E{t}_j$. We analyze this situation case-by-case. \TE{Induction Case I:} If $b_{ij}=1$ for all $\E{o}_i\in\E{t}_j$, then all the assignments are correct and $\E{o}_i\notin\E{t}_j$ implies that $\E{o}_i\notin\cup_{j=1}^T\E{t}_j$. Notice that the $\sqrt{2}R$-spacing ensures that, for each $\E{o}_i$, $B_R(\E{o}_i)$ will overlap at most 9 sampled locations. Therefore, $|\E{S}_j|>9\sum_{k=1}^C c_k(\E{t}_j)$ implies that there must be an sampled location $\E{p}_{mn}\in\E{S}_j$ such that $B_R(\E{p}_{mn})$ does not overlap any object. We can then move $\E{o}_i$ to $\E{p}_{mn}$ and strictly reduce $J(\E{o}_i)$. \TE{Induction Case II:} If $b_{ij}=1$ for some $\E{o}_i\in\E{t}_k$ and $k\neq j$, then we have a mis-assignment. We can strictly reduce $J(\E{o}_i)$ by moving $\E{o}_i$ to some $\E{p}_{mn}\in\E{S}_j$. If there are some sampled location in $\E{S}_j$ without overlaps, then this is feasible and we can strictly reduce $J(\E{o}_i)$. \TE{Induction Case III:} If $b_{ij}=1$ for some $\E{o}_i\in\E{t}_k$, $k\neq j$, and no sampled locations can be found in $\E{S}_j$ without overlaps, then we mark the situation as a $k\to j$ dependency. We can build a dependency chain $j_1\to j_2\to \cdots \to j_Q$ until one of two cases happens. \TE{Induction Case III-A:} If $j_Q\neq j_1$, then we have finally found a region $\E{t}_{j_Q}$ with some empty sampled location, we can move some object from $\E{t}_{j_{Q-1}}$ to $\E{t}_{j_Q}$ and strictly reduce $J(\E{o}_i)$. \TE{Induction Case III-B:} If $j_Q=j_1$, then we have found a loopy dependency. To resolve this loop, we need the buffer location $\bar{\E{p}}_{mn}$. By the definition of a buffer, it is obvious that moving any object in the loop to this buffer location will strictly reduce $J(\E{o}_i)$ (one-step greediness). Indeed, we can move objects cyclically along the augmented loop $j_1\to j_2\to,\cdots,j_{Q-1}\to\bar{\E{p}}_{mn}\to j_1$ to clear the buffer location while strictly reduce strictly reduce $J(\E{o}_i)$.
\end{proof}
This result guarantees the feasibility of grasping actions under mild assumptions on the size of target regions and the dispersion of $o_i$. \prettyref{lem:feas} also suggests a way to form the set of $\E{p}_{mn}$ used when solving \prettyref{eq:MILP}, i.e. $\E{p}_{mn}$ is a subset of $\cup_{j=1}^T\E{S}_j$ without overlapping any $B_R(\E{o}_i)$ plus a buffer location. In practice, the problem can always be solved and we never observed the need of buffer locations.
\fi
\end{document}